\documentclass[11pt]{article}
\usepackage[left=2.0cm,top=2.5cm,right=2.0cm,bottom=2.5cm,bindingoffset=0.0cm]{geometry}

\usepackage{graphicx} 
\usepackage{subfigure}

\usepackage{algorithm}
\usepackage{algorithmic}

\usepackage{hyperref}

\floatstyle{plain}
\newfloat{myalgo}{tbhp}{mya}

\newenvironment{Algorithm}[1][tbh]%
{\begin{myalgo}[#1]
\centering
\begin{minipage}{0.6\linewidth}
\begin{algorithm}[H]}%
{\end{algorithm}
\end{minipage}
\end{myalgo}}

\usepackage{bm}
\usepackage{amsmath}
\usepackage{amsthm}
\usepackage{amsfonts}
\usepackage{amssymb}
\usepackage{color}
\theoremstyle{plain}
\newtheorem{prop}{Proposition}
\newtheorem{lemma}{Lemma}

\theoremstyle{definition}

\newcommand{\becca}[1]{#1}

\def\ie{{\em i.e.,~}}
\def\eg{{\em e.g.,~}}
\def\cf{{\em cf.,~}}

\DeclareMathOperator*{\argmin}{arg\,min}

\begin{document} 

\title{Algebraic Variety Models for High-Rank Matrix Completion}

\author{Greg Ongie\thanks{Department of EECS, University of Michigan,
            Ann Arbor, MI, USA. (\emph{gongie@umich.edu}).}
\and
Rebecca Willett\thanks{Department of ECE, University of Wisconsin-Madison,
             Madison, WI, USA. (\emph{willett@discovery.wisc.edu}).}
\and
Robert D. Nowak\thanks{Department of ECE, University of Wisconsin-Madison,
             Madison, WI, USA. (\emph{rdnowak@wisc.edu}).}
\and
Laura Balzano\thanks{Department of EECS, University of Michigan,
            Ann Arbor, MI USA. (\emph{girasole@umich.edu}).}
}

\maketitle

\begin{abstract} 
We consider a generalization of low-rank matrix completion to the case where the data belongs to an algebraic variety, \ie each data point is a solution to a system of polynomial equations. In this case the original matrix is possibly high-rank, but it becomes low-rank after mapping each column to a higher dimensional space of monomial features. Many well-studied extensions of linear models, including affine subspaces and their union, can be described by a variety model. In addition, varieties can be used to model a richer class of nonlinear quadratic and higher degree curves and surfaces. We study the sampling requirements for matrix completion under a variety model with a focus on a union of affine subspaces. We also propose an efficient matrix completion algorithm that minimizes a convex or non-convex surrogate of the rank of the matrix of monomial features. Our algorithm uses the well-known ``kernel trick'' to avoid working directly with the high-dimensional monomial matrix. We show the proposed algorithm is able to recover synthetically generated data up to the predicted sampling complexity bounds. The proposed algorithm also outperforms standard low rank matrix completion and subspace clustering techniques in experiments with real data.
\end{abstract} 

\section{Introduction}
\label{sec:submission}
Work in the last decade on matrix completion has shown that it is possible to leverage linear structure in order to interpolate missing values in a low-rank matrix \cite{candes2012exact}. The high-level idea of this work is that if the data defining the matrix belongs to a structure having fewer degrees of freedom than the entire dataset, that structure provides redundancy that can be leveraged to complete the matrix. The assumption that the matrix is low-rank is equivalent to assuming the data lies on (or near) a low-dimensional linear subspace. 

It is of great interest to generalize matrix completion to exploit low-complexity \emph{nonlinear} structures in the data. Several avenues have been explored in the literature, from generic manifold learning \cite{lee2013local}, to unions of subspaces \cite{eriksson2012high,elhamifar2013sparse}, to low-rank matrices perturbed by a nonlinear monotonic function \cite{ganti2015matrix,song2016blind}. In each case missing data has been considered, but there lacks a clear, unifying framework for these ideas.

In this work we study the problem of completing a matrix whose columns belong to an \emph{algebraic variety}, \ie the set of solutions to a system of polynomial equations \cite{cox15}. This is a strict generalization of the linear (or affine) subspace model, which can be written as the set of points satisfying a system of linear equations. Unions of subspaces and unions of affine spaces also are algebraic varieties. In addition, a much richer class of non-linear curves, surfaces, and their unions, are captured by a variety model.

The matrix completion problem using a variety model can be formalized as follows. Let $\bm X = \left[\begin{matrix} \bm x_1 ,\dots , \bm x_s \end{matrix} \right] \in \mathbb{R}^{n\times s}$ be a matrix of $s$ data points where each column $\bm x_i \in \mathbb{R}^n$. Define $\phi_d:\mathbb{R}^n\rightarrow\mathbb{R}^N$ as the mapping that sends the vector $\bm x = (x_1,...,x_n)$ to the vector of all monomials in $x_1,...,x_n$ of degree at most $d$, and let $\phi_d(\bm X)$ denote the matrix that results after applying $\phi_d$ to each column of $\bm X$, which we call the \emph{lifted matrix}. We will show the lifted matrix is rank deficient if and only if the columns of $\bm X$ belong to an algebraic variety. This motivates the following matrix completion approach:
\begin{equation}
  \min_{\bm{\hat X}}~~\text{rank}\, \phi_d(\bm{\hat X})~~\text{such that}~~\mathcal{P}_\Omega(\bm{\hat X}) = \mathcal{P}_\Omega(\bm X)
  \label{eq:rankmin}
\end{equation}
where $\mathcal{P}_\Omega( \cdot )$ represents a projection that restricts to some observation set $\Omega \subset \{1,\dots, n\} \times \{1,\dots,s\}$. The rank of $\phi_d(\bm{\hat X})$ depends on the choice of the polynomial degree $d$ and the underlying ``complexity'' of the variety, in a sense we will make precise. Figure \ref{fig:datasets} shows two examples of datasets that have low-rank in the lifted space for different polynomial degree.

In this work we investigate the factors that influence the sampling complexity of varieties as well as algorithms for completion. The challenges are (a) to characterize varieties having low-rank (and therefore few degrees of freedom) in the lifted space, \ie determine when $\phi_d(\bm X)$ is low-rank, 
(b) devise efficient algorithms for solving \eqref{eq:rankmin} that can exploit these few degrees of freedom in a matrix completion setting, and (c) determine the trade-offs relative to existing matrix completion approaches. This work contributes considerable progress towards these goals.

\begin{figure}[ht!]
\centering
\begin{minipage}{0.75\linewidth}
\includegraphics[height=0.32\linewidth]{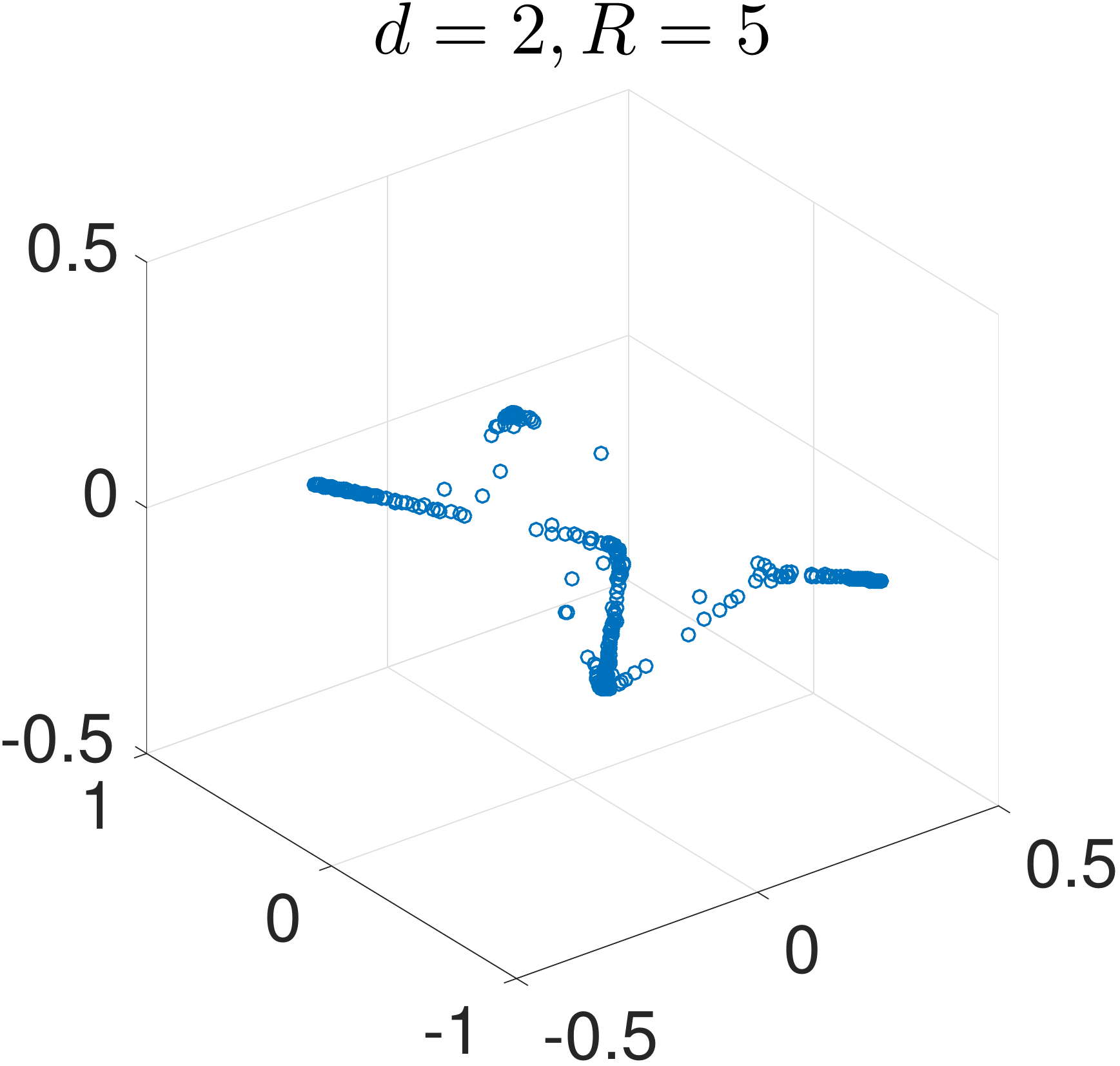}
\includegraphics[height=0.32\linewidth]{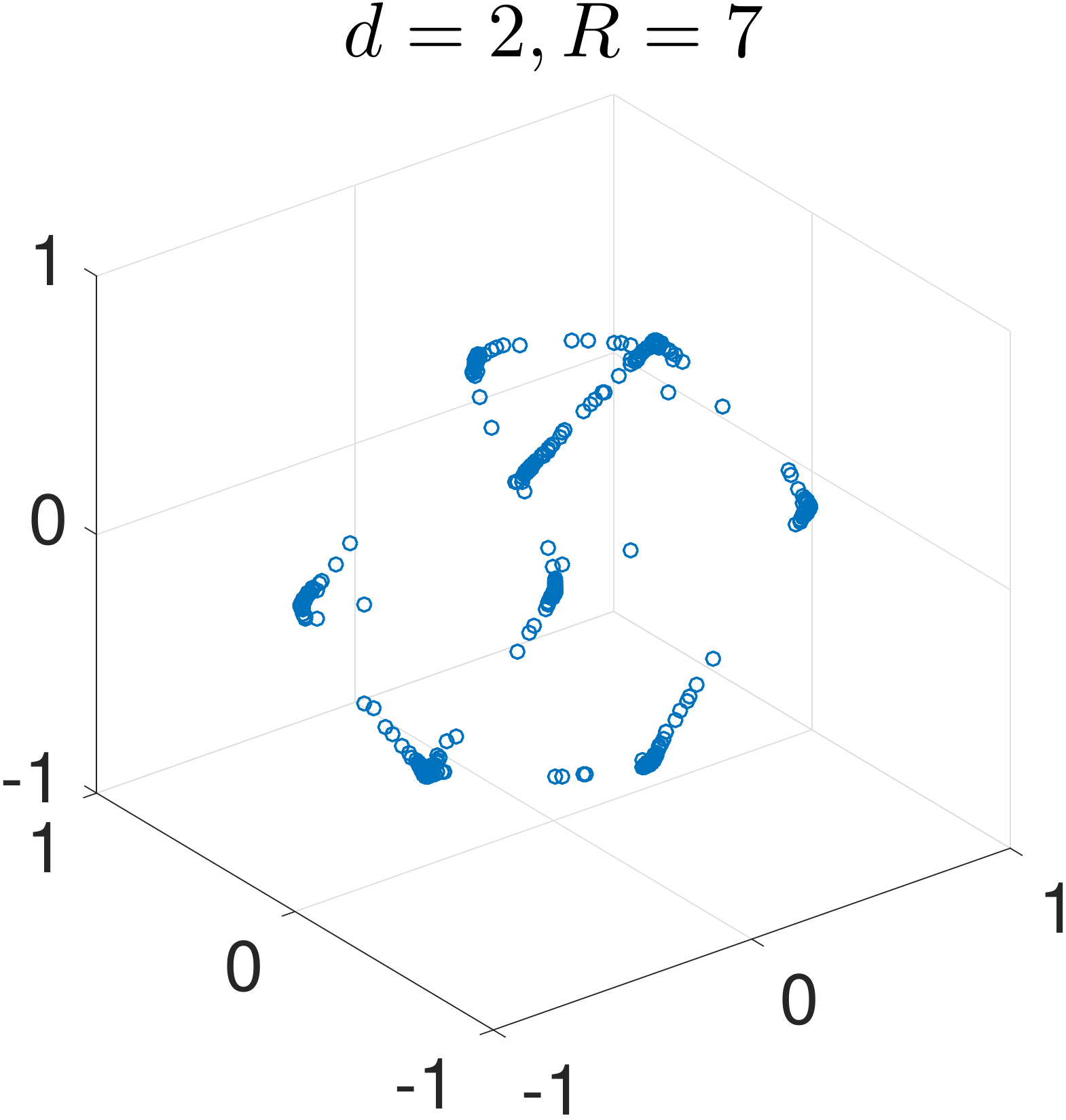}
\includegraphics[height=0.32\linewidth]{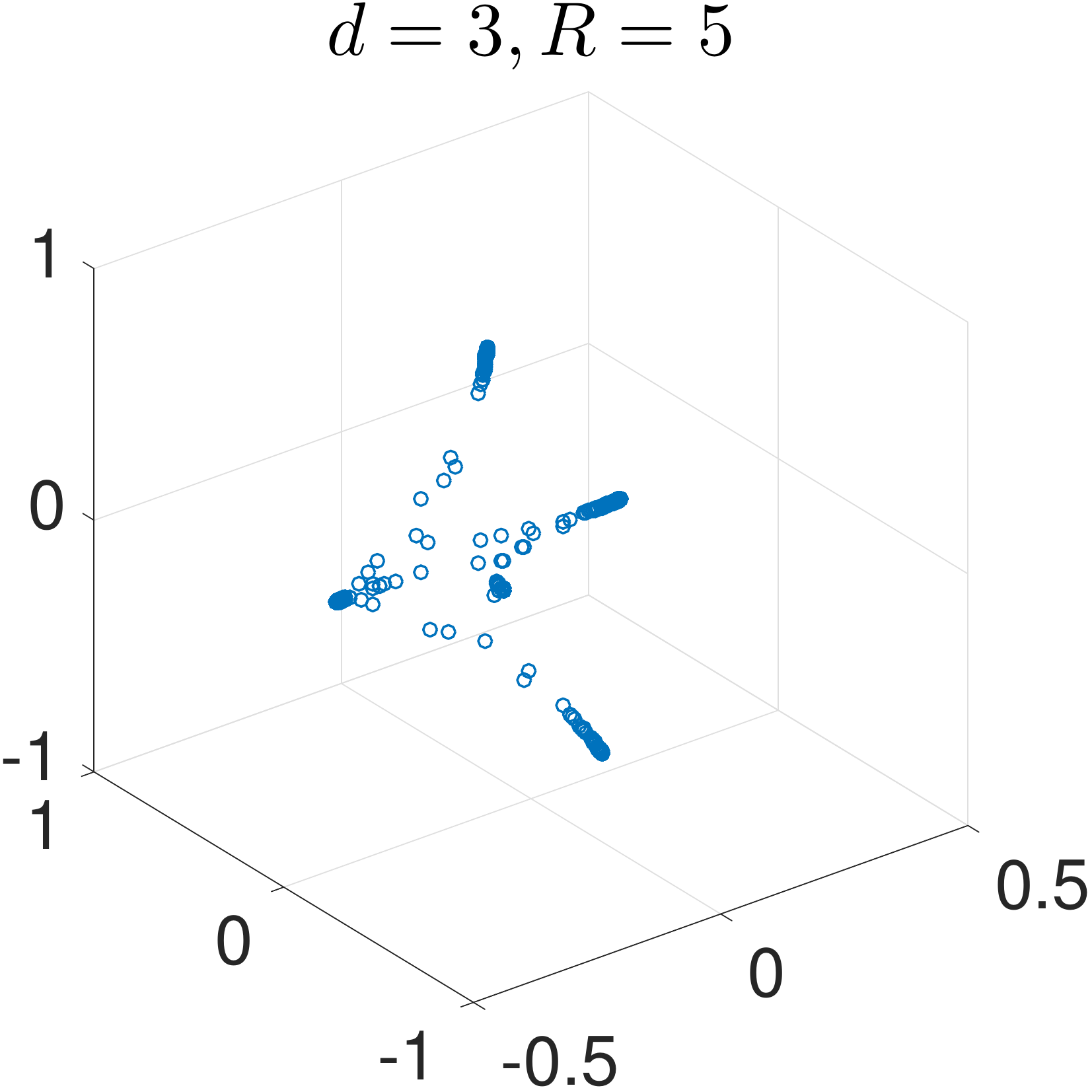}
\end{minipage}
\caption{Data belonging to algebraic varieties in $\mathbb{R}^3$. The original data is full rank, but a nonlinear embedding of the matrix to a feature space consisting of monomials of degree at most $d$ is low-rank with rank $R$, indicating the data has few degrees of freedom.}
\label{fig:datasets}
\end{figure}

For a given variety model, we seek to describe the degrees of freedom that determine the sampling complexity of the model. For example, it is well-known that $n\times n$ rank $r$ matrix can be completed from $O(r n\, \text{polylog}\,n )$ sampled entries under standard incoherence assumptions \cite{recht2011simpler}. This is very close to the $O(rn)$ degrees of freedom for such a matrix. 
Similarly, the degrees of freedom in the lifted space is $O(RN)$, where $R$ is the rank of the lifted matrix and $N$ is the number of higher-degree monomials. This is suggestive of the number of samples required for completion in the lifted space and in turn the number of samples required in the original observation space.
We note that although $N>n$ and $R>r$, for many varieties $r/n \gg R/N$, implying potential for completion in the lifted space.

Our contributions are as follows. We identify bounds on the rank of a matrix $\phi_d(\bm X)$ when the columns of the data matrix $\bm X$ belong to an algebraic variety. We study how many entries of such a matrix should be observed in order to recover the full matrix from an incomplete sample. We show as a case study that monomial representations produce low-rank representations of unions of subspaces, and we characterize the rank. The standard union of subspace representation as a discrete collection of individual subspaces is inherently non-smooth in nature, whereas the algebraic variety allows for a purely continuous parameterization. This leads to general algorithms for completion of a data matrix whose columns belong to a variety. The algorithms' performance are showcased on data simulated as a union of subspaces, a union of low-dimensional parametric surfaces, and real data from a motion segmentation dataset and a motion capture dataset. The simulations show that the performance of our algorithm matches our predictions and outperforms other methods. 
In addition, the analysis of the degrees of freedom associated with the proposed representations introduces several new research avenues at the intersection of nonlinear algebraic geometry and random matrix theory.

\subsection{Related Work}
There has been a great deal of research activity on matrix completion problems since \cite{candes2012exact}, where the authors showed that one can recover an incomplete matrix from few entries using 
a convex relaxation of the rank minimization optimization problem. At this point it is even well-known that $O(rn)$ entries are necessary and sufficient \cite{pimentel2016characterization} for almost every matrix as long as 
the measurement pattern satisfies certain deterministic conditions. However, these methods and theory are restricted to low-rank linear models. A great deal of real data exhibit nonlinear structure, and so it is of interest to generalize this approach.

Work in that direction has dealt with union of subspaces models \cite{eriksson2012high,yang2015sparse,elhamifar2016high,pimentel2016group,pimentel2016information}, locally linear approximations \cite{lee2013local}, 
as well as low-rank models perturbed by an arbitrary nonlinear link function \cite{ganti2015matrix,song2016blind,rao2017learning}. 
In this paper we instead seek a more general model that captures both linear and nonlinear structure. 
The variety model has as instances low-rank subspaces and their union as well as quadratic and higher degree curves and surfaces. 

\becca{Work on kernel PCA (\cf \cite{sanguinetti2006missing,nguyen2009robust}) leverage similar geometry to ours. In {\em Kernel Spectral Curvature Clustering} \cite{chen2009kernel}, the authors similarly consider clustering of data points via subspace clustering in a lifted space using kernels. These works are algorithmic in nature, with promising numerical experiments, but do not systematically consider missing data or analyze relative degrees of freedom.}

This paper also has close ties to algebraic subspace clustering (ASC) \cite{vidal2003algebraic,vidal2005generalized,vidal16,tsakiris2015algebraic}, also known as generalized PCA. Similar to our approach, the ASC framework models unions of subspaces as an algebraic variety, and makes use of monomial liftings of the data to identify the subspaces. Characterizations of the rank of data belonging to union of subspaces under the monomial lifting are used in the ASC framework \cite{vidal16} based on results in \cite{derksen2007hilbert}. The difference of the results in \cite{derksen2007hilbert} and those in Prop.~\ref{prop:rankbound_onesub} is that ours hold for monomial liftings of all degrees $d$, not just $d \geq k$, where $k$ is the number of subspaces. Also, the main focus of ASC is to recover unions of subspaces or unions of affine spaces, whereas we consider data belonging to a more general class of algebraic varieties. Finally, the ASC framework has not been adapted to the case of missing data, which is the main focus of this work.

\section{Variety Models}\label{sec:model}
\subsection{Toy example}
As a simple example to illustrate our approach, consider a matrix
\[
\bm X = 
\begin{pmatrix}
	x_{1,1} & x_{1,2} & \cdots & x_{1,6}\\
	x_{2,1} & x_{2,2} & \cdots & x_{2,6}\\
\end{pmatrix}\in\mathbb{R}^{2\times 6}
\]
whose six columns satisfy the quadratic equation
\begin{equation}\label{eq:unitcircle}
c_0 + c_1\,x_{1,i} + c_2\,x_{2,i} + c_3\,x_{1,i}^2 + c_4\,x_{1,i} x_{2,i} + c_5\,x_{2,i}^2= 0
\end{equation}
for $i = 1,\ldots,6$ and some unknown constants $c_0,...,c_5$ that are not all zero. Generically, $\bm X$ will be full rank. However, suppose we vertically expand each column of the matrix to make a $6\times 6$ matrix
\[
\bm Y = 
\begin{pmatrix}
	1   & 1 & \cdots & 1\\
	x_{1,1} & x_{1,2} & \cdots & x_{1,6}\\
	x_{2,1} & x_{2,2} & \cdots & x_{2,6}\\
    x_{1,1}^2 & x_{1,2}^2 & \cdots & x_{1,6}^2\\
    x_{1,1}x_{2,1} & x_{1,2}x_{2,2} & \cdots & x_{1,6}x_{2,6}\\
	x_{2,1}^2 & x_{2,2}^2 & \cdots & x_{2,6}^2\\
\end{pmatrix},
\]
\ie we augment each column of $\bm X$ with a $1$ and with the quadratic monomials $x_{1,i}^2$, $x_{1,i}x_{2,i}$, $x_{2,i}^2$. This allows us to re-express the polynomial equation \eqref{eq:unitcircle} as the matrix-vector product
\[
\bm Y^T \bm c = \bm{0}
\]
where $\bm c = (c_0,c_1,..,c_5)^T$. In other words, $\bm Y$ is rank deficient. Suppose, for example, that we are missing entry $x_{1,1}$ of $\bm X$. Since $\bm X$ is full rank, there is no way to uniquely complete the missing entry by leveraging linear structure alone. Instead, we ask: Can we complete $x_{1,1}$ using the linear structure present in $\bm Y$? Due to the missing entry $x_{1,1}$, the first column of $\bm Y$ will having the following pattern of missing entries: $(1,-,x_{2,1},-,-,x_{2,1}^2)^T$. However, assuming the five complete columns in $\bm Y$ are linearly independent, we can uniquely determine the nullspace vector $\bm c$ up to a scalar multiple. Then from \eqref{eq:unitcircle} we have
\[
c_3\, x_{1,1}^2 + (c_1 + c_4\, x_{2,1}) x_{1,1} = -c_0 -c_2\,x_{2,1}-c_5\,x_{2,1}^2
\]
In general, this equation will yield at most two possibilities for $x_{1,1}$. Moreover, there are conditions where we can \emph{uniquely} recover $x_{1,1}$, namely when $c_3 = 0$ and $c_1 + c_4\, x_{2,1} \neq 0$.

This example shows that even without a priori knowledge of the particular polynomial equation satisfied by the data, it is possible to uniquely recover missing entries in the original matrix by leveraging induced linear structure in the matrix of expanded monomials. We now show how to considerably generalize this example to the case of data belonging to an arbitrary algebraic variety.

\subsection{Formulation}
Let $\bm X = \left[\begin{matrix} \bm x_1 ,\dots , \bm x_s \end{matrix} \right] \in \mathbb{R}^{n\times s}$ be a matrix of $s$ data points where each column $\bm x_i \in \mathbb{R}^n$. Define $\phi_d:\mathbb{R}^n\rightarrow\mathbb{R}^N$ as the mapping that sends the vector $\bm x = (x_1,...,x_n)$ to the vector of all monomials in $x_1,...,x_n$ of degree at most $d$:
\begin{equation}\label{eq:featuremap}
	\phi_d(\bm x) = ({\bm x}^{\bm \alpha})_{|\bm \alpha| \leq d} \in \mathbb{R}^N
\end{equation}
where $\bm \alpha = (\alpha_1,...,\alpha_n)$ is a multi-index of non-negative integers, with $\bm x^{\bm \alpha} := x_1^{\alpha_1}\cdots x_n^{\alpha_n}$, and $|\bm \alpha| := \alpha_1+\cdots+\alpha_n$. In the context of kernel methods in machine learning, the map $\phi_d$ is often called a polynomial feature map \cite{muller2001introduction}. Borrowing this terminology, we call $\phi_d(\bm x)$ a \emph{feature vector}, the entries of $\phi_d(\bm x)$ \emph{features}, and the range of $\phi_d$ \emph{feature space}. Note that the number of features is given by $N = N(n,d)= \binom{n+d}{n} = \binom{n+d}{d}$, the number of unique monomials in $n$ variables of degree at most $d$. When $\bm X = [\bm x_1,...,\bm x_s]$ is an $n\times s$ matrix, we use $\phi_d(\bm X)$ to denote the $N\times s$ matrix $[\phi_d(\bm x_1),...,\phi_d(\bm x_s)]$. 

The problem we consider is this: can we complete a partially observed matrix $\bm X$ under the assumption that $\phi_d(\bm X)$ is low-rank? This can be posed as the optimization problem given above in Equation \eqref{eq:rankmin}. 
We give a practical algorithm for solving a relaxation of \eqref{eq:rankmin} in Section \ref{sec:alg}. \becca{Similar to previous work cited above on using polynomial feature maps, our method leverages the {\em kernel trick} for efficient computations. However, it would be na\"ive to think of the associated analysis as applying known results on matrix completion sample complexities to our high-dimensional feature space. In particular, if we observe $m$ entries per column in a rank-$r$ matrix of size $n \times s$ and apply the polynomial feature map, then in the feature space we have $M=\binom{m+d}{d}$ entries per column in a rank-$R$ matrix of size $N\times s$. Generally, the number of samples, rank, and dimensional all grow in the mapping to feature space, but they grow at different rates depending on the underlying geometry; it is not immediately obvious what conditions on the geometry and sampling rates impact our ability to determine the missing entries.}
In the remainder of this section, we show how to relate the rank of $\phi_d(\bm X)$ to the underlying variety, and we study the sampling requirements necessary for the completion of the matrix in feature space.

\subsection{Rank properties}\label{subsec:rankprops}
To better understand the rank of the matrix $\phi_d(\bm X)$, we introduce some additional notation and concepts from algebraic geometry. Let $\mathbb{R}[\bm x]$ denote the space of all polynomials with real coefficients in $n$ variables $\bm x = (x_1,...,x_n)$. We model a collection of data as belonging to a \emph{real (affine) algebraic variety} \cite{cox15}, which is defined as the common zero set of a system of polynomials $P \subset \mathbb{R}[\bm x]$:
\[
V(P) = \{\bm x \in\mathbb{R}^n: f(\bm x) = 0\text{ for all }f\in P\}.
\]
Suppose the variety $V(P)$ is defined by the finite set of polynomials $P = \{f_1,...,f_q\}$, where each $f_i$ has degree at most $d$. Let $\bm C \in \mathbb{R}^{N\times q}$ be the matrix whose columns are given by the vectorized coefficients $(c_{\bm \alpha,i})_{|\bm \alpha|\leq d}$ of the polynomials $f_i(x),~i = 1,...,q$ in $P$. Then the columns of $\bm X$ belong to the variety $V(P)$ if and only if $\phi_d(\bm X)^T \bm C = \bm{0}$. In particular, assuming the columns of $\bm C$ are linearly independent, this shows that $\phi_d(\bm X)$ has rank $\leq \min(N-q,s)$. In particular, when the number of data points $s > N-q$, then $\phi_d(\bm X)$ is rank deficient.

However, the exact rank of $\phi_d(\bm X)$ could be much smaller than $\min(N-q,s)$, especially when the degree $d$ is large. This is because the coefficients $\bm c$ of \emph{any polynomial} that vanishes at every column of $\bm X$ satisfies $\phi_d(\bm X)^T\bm c = \bm{0}$. We will find it useful to identify this space of coefficients with a finite dimensional vector space of polynomials. 
 Let $\mathbb{R}_d[\bm x]$ be the space of all polynomials in $n$ real variables of degree at most $d$. We define \emph{vanishing ideal of degree $d$} corresponding to a set $\mathcal{X}\subset\mathbb{R}^n$, denoted by $\mathcal{I}_d(\mathcal{X})$, to be subspace of polynomials belonging to $\mathbb{R}_d[\bm x]$ that vanish at all points in $\mathcal{X}$:
\begin{equation}
\mathcal{I}_d(\mathcal{X}) := \{f\in \mathbb{R}_d[\bm x]: f(\bm x)=0~\text{for all}~\bm x \in \mathcal{X}\}.
\end{equation}
We also define the \emph{non-vanishing ideal of degree $d$} corresponding to $X$, denoted by $\mathcal{S}_d(\mathcal{X})$, to be the orthogonal complement of $\mathcal{I}_d(\mathcal{X})$ in $\mathbb{R}_d[\bm x]$:
\begin{equation}
\mathcal{S}_d(\mathcal{X}) := \{ g \in \mathbb{R}_d[\bm x] : \langle f,g\rangle = 0~~\text{for all}~~f\in \mathcal{I}_d(\mathcal{X})\},
\end{equation}
where the inner product $\langle f,g\rangle$ of polynomials $f,g\in\mathbb{R}_d[\bm x]$ is defined as the inner product of their coefficient vectors. 
Hence, the rank of a data matrix in feature space can be expressed in terms of the dimension of non-vanishing ideal of degree $d$ corresponding to $\mathcal{X} = \{\bm x_1,....,\bm x_s\}$, the set of all columns of $\bm X$. Specifically, we have $\text{rank}~\phi_d(\bm X) = \min(R,s)$ where
\begin{equation}\label{eq:rankbound}
R = \text{dim}~\mathcal{S}_d(\mathcal{X}) = N-\text{dim}~\mathcal{I}_d(\mathcal{X})\;.
\end{equation}
This follows from the rank-nullity theorem, since $\mathbb{R}_d[\bm x]$ has dimension $N$.
In general the dimension of the space $\mathcal{I}_d(\mathcal{X})$ or $\mathcal{S}_d(\mathcal{X})$ is difficult to determine when $\mathcal{X}$ is an arbitrary set of points. However, if we assume $\mathcal{X}$ is a subset of a variety $V$, since $\mathcal{I}_d(V) \subseteq \mathcal{I}_d(\mathcal{X})$ we immediately have the bound
\begin{equation}\label{eq:rankbound_var}
\text{rank}~\phi_d(\bm X) \leq \text{dim}~\mathcal{S}_d(V).
\end{equation}
In certain cases $\text{dim}~\mathcal{S}_d(V)$ can be computed exactly or bounded using properties of the polynomials defining $V$. For example, it is possible to compute the dimension of $\mathcal{S}_d(V)$ directly from a \emph{Gr\"obner basis} for the vanishing ideal associated with $V$ \cite{cox15}.
In Section \ref{sec:uos} we show how to bound the dimension of $\mathcal{S}_d(V)$ in the case where $V$ is a union of subspaces.

\subsection{Sampling rate}\label{subsec:sampling}
Informally, the \emph{degrees of freedom} of a class of objects is the minimum number of free variables needed to describe an element in that class uniquely. For example, a $n\times s$ rank $r$ matrix has $r(n+s-r)$ degrees of freedom: $nr$ parameters to describe $r$ linearly independent columns making up a basis of the column space, and $r(s-r)$ parameters to describe the remaining $s-r$ columns in terms of this basis. It is impossible to uniquely complete a matrix in this class if we sample fewer than this many entries.

We can make a similar argument to specify the minimum number of samples needed to uniquely complete a matrix that is low-rank when mapped to feature space. 
First, we characterize how missing entries of the data matrix translate to missing entries in feature space. For simplicity, we will assume a sampling model where we sample a fixed number of entries $m$ from each column of the original data matrix. Let $\bm x \in \mathbb{R}^n$ represent a single column of the data matrix, and $\Omega \subset \{1,...,n\}$ with $m = |\Omega|$ denote the indices of the sampled entries of $\bm x$. The pattern of revealed entries in $\phi_d(\bm x)$ corresponds to the set of multi-indices: 
\[
\{\bm \alpha = (\alpha_1,...,\alpha_n) : |\bm \alpha| \leq d,~\alpha_i = 0 \text{ for all } i \in \Omega^c\},
\]
which has the same cardinality as the set of all monomials of degree at most $d$ in $m$ variables, \ie $\binom{m+d}{d}$. If we call this quantity $M$, then the ratio of revealed entries in $\phi_d(x)$ to the feature space dimension is
\[
\frac{M}{N} = \frac{\binom{m+d}{d}}{\binom{n+d}{d}} = \frac{(m+d)(m+d-1)\cdots(m+1)}{(n+d)(n+d-1)\cdots(n+1)},
\]
which is on the order of $(\frac{m}{n})^d$ for small $d$. More precisely, we have the bounds
\begin{equation}
  \left(\frac{m}{n}\right)^d \leq \frac{M}{N} \leq \left(\frac{m+d}{n}\right)^d,
\end{equation}
and consequently
\begin{equation}\label{eq:orderMN}
    \frac{m}{n} \leq \left(\frac{M}{N}\right)^{\frac{1}{d}} \leq \frac{m}{n}+\frac{d}{n}.
\end{equation}

In total, observing $m$ entries per column of the data matrix translates to $M$ entries per column in feature space. Suppose the $N\times s$ lifted matrix $\phi_d(\bm X)$ is rank $R$. By the preceding discussion, we need least $R(N+s-R)$ entries of the feature space matrix $\phi_d(\bm X)$ to complete it uniquely among the class of all $N\times s$ matrices of rank $R$. Hence, at minimum we need to satisfy
\begin{equation}\label{eq:minsamp}
Ms \geq R{(N+s-R)}.
\end{equation}
Let $m_0$ denote the minimal value of $m$ such that $M=\binom{m+d}{d}$ achieves the bound \eqref{eq:minsamp}, and set $M_0=\binom{m_0+d}{d}$. Dividing \eqref{eq:minsamp} through by the feature space dimension $N$ and $s$ gives
\begin{equation}\label{eq:boundMN}
\frac{M_0}{N} \geq \left(\frac{R}{N}\right)\left(\frac{N+s-R}{s}\right) = \left(\frac{R}{s} + \frac{R}{N}\left(1-\frac{R}{s}\right)\right),
\end{equation}
and so
from \eqref{eq:orderMN} we see we can guarantee this bound by having 
\begin{equation}\label{eq:sampbound}
\rho_0:= \frac{m_0}{n} \geq \left(\frac{R}{s} + \frac{R}{N}\left(1-\frac{R}{s}\right)\right)^{\frac 1 d}\;,
\end{equation} and this in fact will result in tight satisfaction of \eqref{eq:boundMN} because $\left(M_0/N\right)^{\frac 1 d} \approx m_0/n$ for small $d$ and large $n$. 

At one extreme where the matrix $\phi_d(\bm X)$ is full rank, then $R/s = 1$  or $R/N=1$ and according to \eqref{eq:sampbound} we need $\rho_0 \approx 1$, \ie full sampling of every data column.
At the other extreme where instead we have many more data points than the feature space rank, $R/s \ll 1$, then \eqref{eq:sampbound} gives the asymptotic bound $\rho_0 \approx \left(R/N\right)^{\frac{1}{d}}$.

The above discussion bounds the degrees of freedom of a matrix that is rank-$R$ in feature space. Of course, the proposed variety model has potentially fewer degrees of freedom than this, because additionally the columns of the lifted matrix are constrained to lie in the image of the feature map. We use the above bound only as a rule of thumb for sampling requirements on our matrix. Furthermore, we note that sample complexities for standard matrix completion often require that locations are observed uniformly at random, whereas in our problem the locations of observations in the lifted space will necessarily be structured. However, there is recent work that shows matrix completion can suceed without these assumptions \cite{pimentel2016characterization, chen2014coherent} that gives reason to believe random samples in the original space may allow completion in the lifted space, and our empirical results support this rationale.

\section{Case Study: Union of affine subspaces}\label{sec:uos}
A union of affine subspaces can also be modeled as an algebraic variety. For example, with $(x,y,z)\in\mathbb{R}^3$, the union of the plane $z = 1$ and the line $x = y$ is the zero-set of the \emph{quadratic} polynomial $q(x,y,z) = (z-1)(x-y)$. In general, if $\mathcal{A}_1,\mathcal{A}_2 \subset \mathbb{R}^n$ are affine spaces of dimension $r_1$ and $r_2$, respectively, then we can write $\mathcal{A}_1 = \{\bm x \in \mathbb{R}^n : f_i(\bm x) = 0~\text{for}~i=1,...,n-r_1\}$ and $\mathcal{A}_2 = \{\bm x : g_i(\bm x) = 0~\text{for}~i=1,...,n-r_2\}$ where the $f_i$ and $g_i$ are linear, and their union $\mathcal{A}\cup\mathcal{B}$ can be expressed as the common zero set of all possible products of the $f_i$ and $g_i$:
\begin{equation}
\mathcal{A}_1\cup\mathcal{A}_2 = \{\bm x \in \mathbb{R}^n : f_i(\bm x)g_j(\bm x) = 0~\text{for all}~1\leq i\leq n-r_1,~1\leq j\leq n-r_2\}.
\end{equation}
\ie $\mathcal{A}_1\cup\mathcal{A}_2$ is the common zero set of a system of $(n-r_1)(n-r_2)$ quadratic equations. This argument can be extended to show a union of $k$ affine subspaces of dimensions $r_1,...,r_k$ is a variety described by a system of $\prod_{i=1}^k(n-r_i)$ polynomial equations of degree $k$.

In this section we establish bounds on the feature space rank for data belonging to a union of subspaces. We will make use of the following lemma that shows the dimension of a vanishing ideal is fixed under an affine change of variables:
\begin{lemma}\label{prop:affinetrans}
Let $\mathcal{T}:\mathbb{R}^n\rightarrow\mathbb{R}^n$ be an affine change of variables, \ie $\mathcal{T}(\bm x) = \bm A \bm x + \bm b$, where $\bm b \in \mathbb{R}^n$ and $\bm A \in \mathbb{R}^{n\times n}$ is invertible. Then for any $S\subset \mathbb{R}^n$,
\begin{equation}
\dim \mathcal{I}_d(S) = \dim \mathcal{I}_d(\mathcal T(S)).
\end{equation}
\end{lemma}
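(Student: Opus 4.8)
The plan is to exhibit an explicit linear isomorphism between $\mathcal{I}_d(S)$ and $\mathcal{I}_d(\mathcal{T}(S))$ induced by composition with $\mathcal{T}^{-1}$, and then invoke that isomorphisms preserve dimension. First I would observe that precomposition with an affine map sends polynomials of degree at most $d$ to polynomials of degree at most $d$: if $f \in \mathbb{R}_d[\bm x]$, then $f \circ \mathcal{T}(\bm x) = f(\bm A \bm x + \bm b)$ is again a polynomial in $\bm x$ whose degree does not exceed $d$, because substituting the degree-one expressions $\bm A \bm x + \bm b$ into a monomial of degree $|\bm\alpha| \le d$ produces a polynomial of degree $|\bm\alpha| \le d$. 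Hence the composition operator $\mathcal{C}_{\mathcal{T}} : \mathbb{R}_d[\bm x] \to \mathbb{R}_d[\bm x]$, $f \mapsto f \circ \mathcal{T}$, is a well-defined linear map on $\mathbb{R}_d[\bm x]$.

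Next I would check that $\mathcal{C}_{\mathcal{T}}$ is invertible. Since $\bm A$ is invertible, $\mathcal{T}$ has an affine inverse $\mathcal{T}^{-1}(\bm y) = \bm A^{-1}(\bm y - \bm b)$, which is again an affine change of variables, so $\mathcal{C}_{\mathcal{T}^{-1}}$ is also a well-defined linear endomorphism of $\mathbb{R}_d[\bm x]$ by the previous paragraph. The identities $\mathcal{C}_{\mathcal{T}} \circ \mathcal{C}_{\mathcal{T}^{-1}} = \mathcal{C}_{\mathcal{T}^{-1}} \circ \mathcal{C}_{\mathcal{T}} = \mathrm{id}$ follow from $(f \circ \mathcal{T}^{-1}) \circ \mathcal{T} = f$ and associativity of composition. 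Therefore $\mathcal{C}_{\mathcal{T}}$ is a linear automorphism of $\mathbb{R}_d[\bm x]$.

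Finally I would show $\mathcal{C}_{\mathcal{T}}$ restricts to a bijection from $\mathcal{I}_d(\mathcal{T}(S))$ onto $\mathcal{I}_d(S)$. A polynomial $f \in \mathbb{R}_d[\bm x]$ lies in $\mathcal{I}_d(\mathcal{T}(S))$ precisely when $f(\bm y) = 0$ for all $\bm y \in \mathcal{T}(S)$, equivalently $f(\mathcal{T}(\bm x)) = 0$ for all $\bm x \in S$, i.e. $\mathcal{C}_{\mathcal{T}}(f) \in \mathcal{I}_d(S)$. Thus $\mathcal{C}_{\mathcal{T}}(\mathcal{I}_d(\mathcal{T}(S))) \subseteq \mathcal{I}_d(S)$, and applying the same reasoning to $\mathcal{T}^{-1}$ gives $\mathcal{C}_{\mathcal{T}^{-1}}(\mathcal{I}_d(S)) \subseteq \mathcal{I}_d(\mathcal{T}(S))$; since $\mathcal{C}_{\mathcal{T}^{-1}}$ inverts $\mathcal{C}_{\mathcal{T}}$, these two containments force equality and the restriction is a vector space isomorphism. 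Dimension is preserved under isomorphism, so $\dim \mathcal{I}_d(S) = \dim \mathcal{I}_d(\mathcal{T}(S))$.

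The only subtle point — and the one I would take care to state cleanly — is the degree claim: that substituting affine-linear forms into a polynomial does not raise its degree. This is immediate once phrased correctly (a monomial $\bm x^{\bm\alpha}$ of total degree $|\bm\alpha|$ maps to a product of $|\bm\alpha|$ affine forms, hence has degree $\le |\bm\alpha|$, and degrees only drop under the linear combination defining $f$), but it is essential because it is what keeps the whole argument inside the finite-dimensional space $\mathbb{R}_d[\bm x]$ where the rank--nullity / dimension bookkeeping of \eqref{eq:rankbound} applies. Everything else is formal manipulation of composition operators.
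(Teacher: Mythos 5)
Your proof is correct and follows exactly the route the paper has in mind: the paper omits the proof but notes it ``relies on the fact the degree of a polynomial is unchanged under an affine change of variables,'' which is precisely the degree-preservation observation you use to make the composition operator $f \mapsto f \circ \mathcal{T}$ a linear automorphism of $\mathbb{R}_d[\bm x]$ restricting to an isomorphism $\mathcal{I}_d(\mathcal{T}(S)) \to \mathcal{I}_d(S)$. Nothing is missing; your write-up simply supplies the details the authors chose to omit for brevity.
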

We omit the proof for brevity, but the result is elementary and relies on the fact the degree of a polynomial is unchanged under an affine change of variables. Our next result establishes a bound on the feature space rank for a single affine subspace:

\begin{prop}\label{prop:rankbound_onesub}
Let the columns of a matrix $\bm X^{n\times s}$ belong to an affine subspace $\mathcal{A}\subset \mathbb{R}^n$ of dimension $r$. Then
\begin{equation}\label{eq:rankbound_onesub}
\text{rank}\,\phi_d(\bm X) \leq \binom{r+d}{d},~~\text{for all}~~d\geq 1.
\end{equation}
\end{prop}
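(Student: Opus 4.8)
The plan is to reduce to the case of a coordinate subspace via an affine change of variables and then count monomials. First note that $\mathcal{A}$ is itself an algebraic variety: an affine subspace of dimension $r$ in $\mathbb{R}^n$ is the common zero set of $n-r$ affine-linear (hence polynomial) equations, each of degree at most $1\leq d$. Since every column of $\bm X$ lies in $\mathcal{A}$, the bound \eqref{eq:rankbound_var} gives $\text{rank}\,\phi_d(\bm X) \leq \dim\mathcal{S}_d(\mathcal{A})$, so it suffices to prove $\dim\mathcal{S}_d(\mathcal{A}) \leq \binom{r+d}{d}$; by \eqref{eq:rankbound} this is equivalent to $\dim\mathcal{I}_d(\mathcal{A}) \geq N - \binom{r+d}{d}$.

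Next I would use Lemma~\ref{prop:affinetrans} to move $\mathcal{A}$ into a convenient position. Writing $\mathcal{A} = \bm b_0 + \mathrm{span}\{\bm v_1,\ldots,\bm v_r\}$ and extending $\bm v_1,\ldots,\bm v_r$ to a basis $\bm v_1,\ldots,\bm v_n$ of $\mathbb{R}^n$, the affine map $\mathcal{T}^{-1}(\bm y) = \bm b_0 + [\bm v_1 \cdots \bm v_n]\,\bm y$ is an invertible change of variables carrying the coordinate subspace $\mathcal{A}_0 := \{\bm x \in \mathbb{R}^n : x_{r+1}=\cdots=x_n = 0\}$ onto $\mathcal{A}$; hence $\mathcal{T} = (\mathcal{T}^{-1})^{-1}$ carries $\mathcal{A}$ onto $\mathcal{A}_0$, and Lemma~\ref{prop:affinetrans} yields $\dim\mathcal{I}_d(\mathcal{A}) = \dim\mathcal{I}_d(\mathcal{A}_0)$. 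The point of invoking this lemma is precisely to avoid tracking how the feature map $\phi_d$ itself transforms under $\mathcal{T}$.

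It then remains to compute $\dim\mathcal{I}_d(\mathcal{A}_0)$ directly. A polynomial $f = \sum_{|\bm\alpha|\leq d} c_{\bm\alpha}\,\bm x^{\bm\alpha}$ vanishes on all of $\mathcal{A}_0$ if and only if its restriction $f(x_1,\ldots,x_r,0,\ldots,0)$ is the zero polynomial in $x_1,\ldots,x_r$; since distinct monomials are linearly independent over $\mathbb{R}$, this happens exactly when $c_{\bm\alpha} = 0$ for every multi-index $\bm\alpha$ with $\alpha_{r+1}=\cdots=\alpha_n = 0$. The number of such multi-indices with $|\bm\alpha|\leq d$ equals the number of monomials of degree at most $d$ in $r$ variables, namely $\binom{r+d}{d}$. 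Hence $\dim\mathcal{I}_d(\mathcal{A}_0) = N - \binom{r+d}{d}$, so $\dim\mathcal{S}_d(\mathcal{A}) = \dim\mathcal{S}_d(\mathcal{A}_0) = \binom{r+d}{d}$, and combining with the first paragraph gives \eqref{eq:rankbound_onesub}.

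In terms of difficulty, there is essentially no hard step: the argument is a reduction followed by a monomial count. The only points that require a little care are (i) exhibiting an invertible affine $\mathcal{T}$ sending $\mathcal{A}$ to a coordinate subspace, which is immediate from extending a basis, and (ii) the equivalence between a polynomial restricting to zero on $\mathcal{A}_0$ and the vanishing of its coefficients supported on the first $r$ variables, which is just linear independence of monomials. If one preferred to avoid Lemma~\ref{prop:affinetrans}, an alternative is to observe that $\phi_d\circ\mathcal{T} = \bm B\,\phi_d$ for an invertible matrix $\bm B$ (each monomial in $\mathcal{T}(\bm x)$ of degree $\leq d$ is a linear combination of monomials in $\bm x$ of degree $\leq d$, and invertibility of $\bm B$ follows from that of $\mathcal{T}^{-1}$), so that $\phi_d(\mathcal{T}(\bm X)) = \bm B\,\phi_d(\bm X)$ has the same rank as $\phi_d(\bm X)$; but the route through the stated lemma is cleaner.
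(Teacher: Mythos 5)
Your argument is correct and follows essentially the same route as the paper's proof: reduce to a coordinate subspace via Lemma~\ref{prop:affinetrans} and then count the surviving monomials $x_1^{\alpha_1}\cdots x_r^{\alpha_r}$ with $|\bm\alpha|\leq d$ (the paper phrases this as computing $\mathcal{S}_d(\mathcal{A})$ directly rather than $\dim\mathcal{I}_d(\mathcal{A}_0) = N-\binom{r+d}{d}$, but the content is identical). The alternative you sketch, $\phi_d\circ\mathcal{T}=\bm B\,\phi_d$ with $\bm B$ invertible, is precisely the fact underlying the lemma itself.
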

\begin{proof}
By Lemma~\ref{prop:affinetrans}, $\text{dim}\, \mathcal{I}_d(\mathcal A)$ is preserved under an affine transformation of $\mathcal{A}$. Note that we can always find an affine change of variables $\bm y = \bm A\bm x+\bm c$ with invertible $\bm A \in\mathbb{R}^{n\times n}$ and $\bm c\in\mathbb{R}^n$ such that in the coordinates $\bm y = (y_1,...,y_n)$ the variety $\mathcal{A}$ becomes
\begin{equation}
\mathcal{A} = \{(y_1,\ldots,y_r,0,\ldots,0):y_1,...,y_r\in \mathbb{R}\}.
\end{equation} 
For any polynomial $f(\bm y) = \sum_{|\bm \alpha|\leq d} c_{\bm \alpha} \bm y^{\bm \alpha} $, the only monomial terms in $f(\bm y)$ that do not vanish on $\mathcal{A}$ are those having the form $y_1^{\alpha_1}\cdots y_r^{\alpha_r}$. Furthermore, any polynomial in just these monomials that vanishes on all of $\mathcal{A}$ must be the zero polynomial, since the $y_1,...,y_r$ are free variables. Therefore,
\begin{align}
\mathcal{S}_d(\mathcal{A}) & = \text{span}\{y_1^{\alpha_1}\cdots y_r^{\alpha_r} : \alpha_1+\cdots + \alpha_r \leq d\}
\end{align}
\ie the non-vanishing ideal coincides with the space of polynomials in $r$ variables of degree at most $d$, which is $\binom{r+d}{d}$, proving the claim.
\end{proof}

We note that for $s$ sufficiently large, the bound in \eqref{eq:rankbound_onesub} becomes an equality, provided the data points are distributed generically within the affine subspace, meaning they are not the solution to additional non-trivial polynomial equations of degree at most $d$.\footnote{This is a consequence of the Hilbert basis theorem \cite{cox15}, which shows that every vanishing ideal has a finite generating set. For related discussion see Appendix C of \cite{vidal16}.}

We now derive bounds on the dimension of the non-vanishing/vanishing ideals for a union of affine varieties. Below we give a more general argument for a union of arbitrary varieties, then specialize to affine spaces.

Let $\mathcal{A},\mathcal{B}\subset\mathbb{R}^n$ be any two varieties. 
 It follows directly from definitions that
\begin{equation}
\mathcal{I}_d(\mathcal{A}\cup\mathcal{B}) = \mathcal{I}_d(\mathcal{A})\cap\mathcal{I}_d(\mathcal{B}).
\end{equation}
Applying orthogonal complements to both sides above gives
\begin{equation}
\mathcal{S}_d(\mathcal{A}\cup\mathcal{B}) = (\mathcal{I}_d(\mathcal{A}\cup\mathcal{B}))^\perp = \mathcal{S}_d(\mathcal{A})+\mathcal{S}_d(\mathcal{B}).
\end{equation}
Therefore, we have the bound
\begin{equation}\label{eq:unionrankboundtwo}
\text{dim}\,\mathcal{S}_d(\mathcal{A}\cup\mathcal{B}) \leq \text{dim}\,\mathcal{S}_d(\mathcal{A})+\text{dim}\,\mathcal{S}_d(\mathcal{B}).
\end{equation}
In the case of an arbitrary union of varieties $\cup_{i=1}^k \mathcal{A}_i$, repeated application of \eqref{eq:unionrankboundtwo} gives
\begin{equation}\label{eq:unionrankboundarb}
\text{dim}\,\mathcal{S}_d\left(\bigcup_{i=1}^k\mathcal{A}_i\right) \leq \sum_{i=1}^k\text{dim}\,\mathcal{S}_d(\mathcal{A}_i).
\end{equation}
Specializing to the case where each $\mathcal{A}_i$ is affine subspace of dimension at most $r$ and applying the result in Prop.~\ref{prop:rankbound_onesub} gives the following result:
\begin{prop}\label{prop:boundunion}
Let the columns of a matrix $\bm X^{n\times s}$ belong to a union of $k$ affine subspaces $\mathcal{A}_1$,..., $\mathcal{A}_k$ each having dimension at most $r$. Then we have the bound: 
\begin{equation}\label{eq:prop:boundunion}
\text{rank}\,\phi_d(\bm X) \leq  k \binom{r+d}{d},~~\text{for all}~d\geq 1.
\end{equation}
\end{prop}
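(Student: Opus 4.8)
The plan is to combine the already-established subadditivity of $\dim\mathcal{S}_d$ over unions of varieties with the single-subspace rank bound, and then translate the resulting bound on $\dim\mathcal{S}_d$ into a bound on $\operatorname{rank}\phi_d(\bm X)$. Concretely, let $V = \bigcup_{i=1}^k \mathcal{A}_i$ and let $\mathcal{X} = \{\bm x_1,\dots,\bm x_s\}$ be the set of columns of $\bm X$. Since each column lies in some $\mathcal{A}_i$, we have $\mathcal{X}\subseteq V$, so by \eqref{eq:rankbound_var} we get $\operatorname{rank}\phi_d(\bm X)\leq \dim\mathcal{S}_d(V)$.

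Next I would invoke \eqref{eq:unionrankboundarb} to obtain $\dim\mathcal{S}_d(V)\leq \sum_{i=1}^k \dim\mathcal{S}_d(\mathcal{A}_i)$. For each $i$, Proposition~\ref{prop:rankbound_onesub}'s proof identifies $\mathcal{S}_d(\mathcal{A}_i)$ (for an affine subspace of dimension $r_i$) with the space of polynomials in $r_i$ variables of degree at most $d$, so $\dim\mathcal{S}_d(\mathcal{A}_i) = \binom{r_i+d}{d}$; since $r_i\leq r$ and $\binom{r+d}{d}$ is nondecreasing in $r$, this is at most $\binom{r+d}{d}$. Summing over the $k$ subspaces gives $\dim\mathcal{S}_d(V)\leq k\binom{r+d}{d}$, and chaining the inequalities yields \eqref{eq:prop:boundunion}, valid for all $d\geq 1$ since Proposition~\ref{prop:rankbound_onesub} holds for all $d\geq 1$.

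One small subtlety worth spelling out is the passage from $\operatorname{rank}\phi_d(\bm X) = \min(R,s)$ with $R = \dim\mathcal{S}_d(\mathcal{X})$ to the stated bound: it suffices that $\dim\mathcal{S}_d(\mathcal{X})\leq \dim\mathcal{S}_d(V)$, which holds because $\mathcal{I}_d(V)\subseteq\mathcal{I}_d(\mathcal{X})$ (more zero constraints on a subset) and taking orthogonal complements reverses the inclusion — exactly the reasoning already recorded for \eqref{eq:rankbound_var}. There is essentially no real obstacle here; the entire argument is a short assembly of \eqref{eq:rankbound_var}, \eqref{eq:unionrankboundarb}, and Proposition~\ref{prop:rankbound_onesub}. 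The only thing to be careful about is bookkeeping — making sure the monotonicity of $\binom{r+d}{d}$ in $r$ is mentioned so that "dimension at most $r$" rather than "exactly $r$" is handled cleanly — but this is routine.
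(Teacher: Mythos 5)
Your proposal is correct and follows exactly the paper's own route: the subadditivity bound \eqref{eq:unionrankboundarb} for $\dim\mathcal{S}_d$ over a union of varieties, combined with Proposition~\ref{prop:rankbound_onesub} for each affine piece and the containment argument behind \eqref{eq:rankbound_var}. The extra care you take with the monotonicity of $\binom{r+d}{d}$ in $r$ is a minor tidying of a step the paper leaves implicit.
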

We remark that in some cases the bound \eqref{eq:prop:boundunion} is (nearly) tight. For example, if the data lies on the union of two $r$-dimensional affine subspaces $\mathcal{A}$ and $\mathcal{B}$ that are mutually orthogonal, one can show $\text{rank}\,\phi_d(\bm X) = 2\binom{r+d}{d}-1$. The rank is one less than the bound in \eqref{eq:prop:boundunion} because $\mathcal{S}_d(\mathcal{A})\cap\mathcal{S}_d(\mathcal{B})$ has dimension one, coinciding with the space of constant polynomials. Determining the exact rank for data belonging to an arbitrary finite union of subspaces appears to be a intricate problem; see \cite{derksen2007hilbert} which studies the related problem of determining the Hilbert series of a union of subspaces. Empirically, we observe that the bound in \eqref{eq:prop:boundunion} is order-optimal with respect to $k,r,$ and $d$.

The feature space rank to dimension ratio $R/N$ in this case is given by
\begin{equation}
\frac{R}{N} \approx k \frac{\binom{r+d}{d}}{\binom{n+d}{d}} \approx k\left(\frac{r}{n}\right)^d
\end{equation}
Recall that the minimum sampling rate is approximately $(R/N)^{\frac{1}{d}}$ for $s\gg R$. Hence we would need
\begin{equation}\label{eq:uossamprate}
m \approx k^{\frac{1}{d}} r.
\end{equation}
This rate is favorable to low-rank matrix completion approaches, which need $m \approx k r$ measurements per column for a union of $k$ subspaces having dimension $r$. While this bound suggests it is always better to take the degree $d$ as large as possible, this is only true for sufficiently large $s$. To take advantage of the improved sampling rate implied by \eqref{eq:uossamprate}, according to \eqref{eq:sampbound} we need the number of data vectors per subspace to be $O(r^d)$. In other words, our model is able to accommodate more subspaces with larger $d$ but at the expense of requiring exponentially more data points per subspace. We note that if the number of data points is not an issue, we could take $d=\log k$ and require only $m \approx O(r)$ observed entries per column. In this case, for moderately sized $k$ (\eg $k \leq 20$) we should choose we have $d=2$ or $3$. In fact, we find that for these values of $d$ we get excellent empirical results, as we show in Section \ref{sec:exp}. 

\section{Algorithms}\label{sec:alg}
There are several existing matrix completion algorithms that could potentially be adapted to solve a relaxation of the rank minimization problem \eqref{eq:rankmin}, such as singular value thresholding \cite{cai2010singular}, or alternating minimization \cite{jain2013low}. However, these approaches do not easily lend themselves to ``kernelized'' implementations, \ie ones that do not require forming the high-dimensional lifted matrix $\phi_d(X)$ explicitly, but instead make use of the efficiently computable \emph{kernel function} for polynomial feature maps \footnote{Strictly speaking, $k_d$ is not kernel associated with the polynomial feature map $\phi_d$ as defined in \eqref{eq:featuremap}. Instead, it is the kernel of the related map $\tilde{\phi}_d(\bm x) := \{\sqrt{c_{\bm \alpha}} \bm x^{\bm \alpha} : |\bm \alpha| \leq d\}$ where $c_{\bm\alpha}$ are appropriately chosen multinomial coefficients.}
\begin{equation}\label{eq:kernel}
k_d(\bm x,\bm y) := \langle \phi_d(\bm x),\phi_d(\bm y)\rangle =  (\bm x^T\bm y + 1)^d.
\end{equation}
For matrices $\bm X = [\bm x_1,...,\bm x_s], \bm Y = [\bm y_1,...,\bm y_s] \in \mathbb{R}^{n\times s}$, we use $k_d(\bm X,\bm Y)$ to denote the matrix whose $(i,j)$-th entry is $k_d(\bm x_i,\bm y_j)$, equivalently,
\[
k_d(\bm X,\bm Y) = (\bm X^T\bm Y + \bm 1)^{\odot d}.
\]
where $\bm 1 \in \mathbb{R}^{s\times s}$ is the matrix of all ones, and $(\cdot)^{\odot d}$ denotes the entrywise $d$-th power of a matrix. A kernelized implementation of the matrix completion algorithm is critical for large $d$, since the rows of the lifted matrix $N$ scales exponentially with $d$.

One class of algorithm that kernelizes very naturally is the iterative reweighted least squares (IRLS) approach of \cite{fornasier11lmr,mohan2012iterative} for low-rank matrix completion. The algorithm also has the advantage of being able to accommodate the non-convex Schatten-$p$ relaxation of the rank penalty, in addition to the convex nuclear norm relaxation. Specifically, we use an IRLS approach to solve 
\begin{equation}\label{eq:Schattenp}
\tag{VMC}
\min_{\bm X} \|\phi_d(\bm X)\|_{\mathcal{S}_p}^p~~\text{s.t.}~~\mathcal{P}_{\Omega}(\bm X) = \mathcal{P}_\Omega(\bm X_0),
\end{equation}
where $\|\bm Y\|_{\mathcal{S}_p}$ is the Schatten-$p$ quasi-norm defined as
\begin{equation}\label{eq:Sp}
\|\bm Y\|_{\mathcal{S}_p} := \left(\sum_i \sigma_i(\bm Y)^p\right)^{\frac{1}{p}},~~0 < p \leq 1
\end{equation}
with $\sigma_i(\bm Y)$ denoting the $i^{\rm th}$ singular value of $\bm Y$. Note that for $p=1$ we recover the nuclear norm. We call this optimization formulation variety-based matrix completion (VMC).

The basic idea behind the IRLS approach can be illustrated in the case of the nuclear norm. First, we can re-express the nuclear norm as a weighted Frobenius norm:
\[
\|\bm Y\|_* = \mathsf{tr}[(\bm Y^T\bm Y)^{\frac{1}{2}}] = \mathsf{tr}[(\bm Y^T\bm Y)\underbrace{(\bm Y^T\bm Y)^{-\frac{1}{2}}}_{\bm W}]
\]
and then attempt to minimize the nuclear norm of a matrix $\bm Y$ belonging to a constraint set $\mathcal{C}$ by performing the iterations
\begin{align*}
\bm W_n & = (\bm Y_n^T \bm Y_n)^{-\frac{1}{2}}\\
\bm Y_{n+1} & = \argmin_{\bm Y\in \mathcal{C}}~\mathsf{tr}[(\bm Y^T\bm Y)\bm W_n].
\end{align*}
Note the $\bm Y$-update can be recast as a weighted least-squares problem subject to the iteratively updated weight matrix $\bm W$, lending the algorithm its name. To ensure the matrix defining $\bm W$ is invertible, and to improve numerical stability, we can also introduce a smoothing parameter $\gamma_n>0$ to the $\bm W$-update as $\bm W_n = (\bm Y^T\bm Y+\gamma_n)^{-\frac{1}{2}}$, satisfying $\gamma_n\rightarrow 0$ as $n\rightarrow \infty$.

Making the substitution $\bm Y = \phi_d(\bm X)$, and replicating the steps above gives the following IRLS approach for solving \eqref{eq:Schattenp} with $p=1$:
\begin{align*}
\bm W_n & = (k(\bm X_n,\bm X_n)+\gamma_n I)^{-\frac{1}{2}}\\
\bm X_{n+1} & = \argmin_{\bm X}~\mathsf{tr}[k(\bm X,\bm X)\bm W_n]~~\mbox{s.t.}~~\mathcal{P}_\Omega(\bm X) = \mathcal{P}_\Omega(\bm X_0)
\end{align*}
Rather than finding the exact minimum in the $\bm X$ update, which could be costly, following the approach in \cite{mohan2012iterative}, we instead take a single projected gradient descent step to update $\bm X$. A straightforward calculation shows that the gradient of the objective $F(\bm X) = \mathsf{tr}[k(\bm X,\bm X)\bm W_n]$ is given by $\nabla F(\bm X) = \bm X(\bm W \odot k_{d-1}(\bm X,\bm X))$, where $\odot$ denotes an entry-wise product. Hence a projected gradient step is given by
\begin{align*}
\tilde{\bm X}_{n} & = \bm X_n -\tau \bm X_n(\bm W \odot k_{d-1}(\bm X_n,\bm X_n))\\
 \bm X_n  & =  \mathcal{P}_\Omega(\bm X_0) + \mathcal{P}_{\Omega^c}(\tilde{\bm X}) 
\end{align*}
where $\tau$ is a step-size parameter. 

\begin{Algorithm}[t]
\caption{Kernelized IRLS to solve \eqref{eq:Schattenp}.}
\label{alg:knn}
\begin{algorithmic}
\REQUIRE Samples $P_\Omega(\bm X_0)$, initialization of $\bm X$, polynomial kernel degree $d$, value $p$ of Schatten-$p$ penalty, $0 < p \leq 1$, IRLS parameters $\gamma,\eta$.
\STATE Set $q = 1-\frac{p}{2}$.
\WHILE{not converged}
  \STATE \textit{Step 1: Inverse power of kernel matrix}
  \STATE $\bm K \leftarrow k_d(\bm X, \bm X)$
  \STATE $(\bm V,\bm S) = \mathtt{eig}(\bm K)$.
  \STATE $\bm W \leftarrow \bm V (\bm S+\gamma \bm I)^{-q}\bm V^T$

  \vspace{1em}
  \STATE \textit{Step 2: Projected gradient descent step}
  \STATE $\tau \leftarrow \gamma^q$
  \STATE $\bm A \leftarrow \bm W\odot k_{d-1}(\bm X,\bm X)$
  \STATE $\bm X \leftarrow \bm X(\bm I-\tau\bm A)$
  \STATE $\bm X \leftarrow \mathcal{P}_{\Omega}(\bm X_0) + \mathcal{P}_{\Omega^c}(\bm X)$
  \vspace{1em}
  \STATE $\gamma \leftarrow \gamma / \eta$
\ENDWHILE
\end{algorithmic}
\end{Algorithm}

The above derivation can easily be extended to the Schatten-$p$ minimization problem \eqref{eq:Schattenp} by simply changing the $\bm W$-update to be the negative $q$th power of the kernel matrix, where $q = 1-\frac{p}{2}$. See Algorithm \ref{alg:knn} for pseudo-code of the proposed IRLS algorithm for solving \eqref{eq:Schattenp}. Heuristics are given in \cite{mohan2012iterative} for setting the optimization parameters $\gamma$ and $\tau$ based on $q$, which we adopt as well. Specifically, we set $\gamma_{n} = \gamma_0/\eta^n$, where $\gamma_0$ and $\eta$ are user-defined parameters, and update $\tau_{n} = \gamma_n^q$ where $q = 1-p/2$. The appropriate choice of $\gamma_0$ and $\eta$ will depend on the scaling and spectral properties of the data. Empirically, we find that setting $\gamma_0 = (0.1)^d \lambda_{max}$, where $\lambda_{max}$ is the largest eigenvalue of the kernel matrix obtained from the initialization, and $\eta = 1.01$ work well in a variety of settings. For all our experiments in Section \ref{sec:exp} we fix $p=1/2$, which was found to give the best matrix recovery results for synthetic data.

\begin{figure*}[ht]
\centering
\begin{tabular}{cc}
\includegraphics[width=0.60\textwidth]{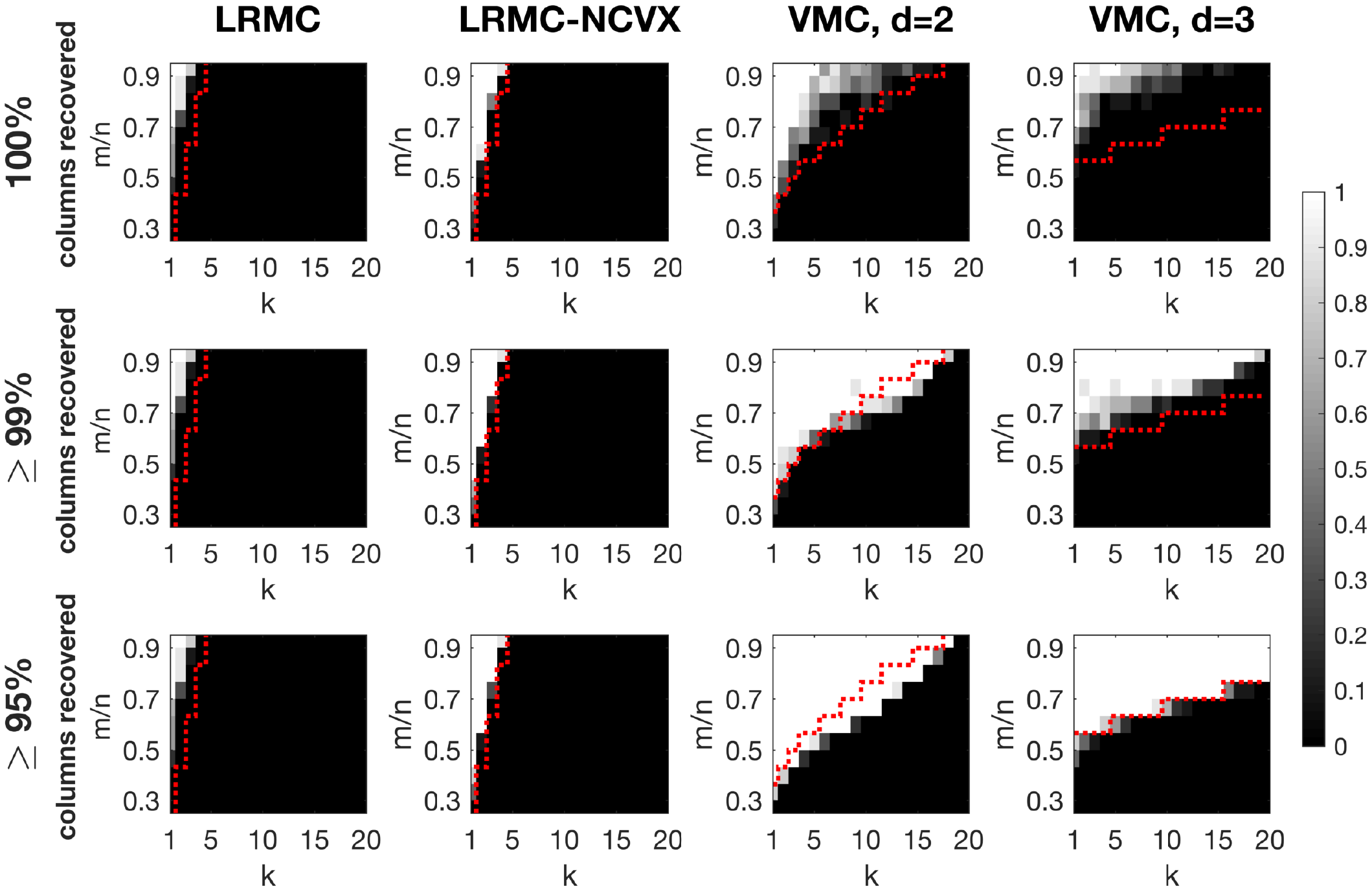} & 
\raisebox{1.0em}{\includegraphics[width=0.39\textwidth]{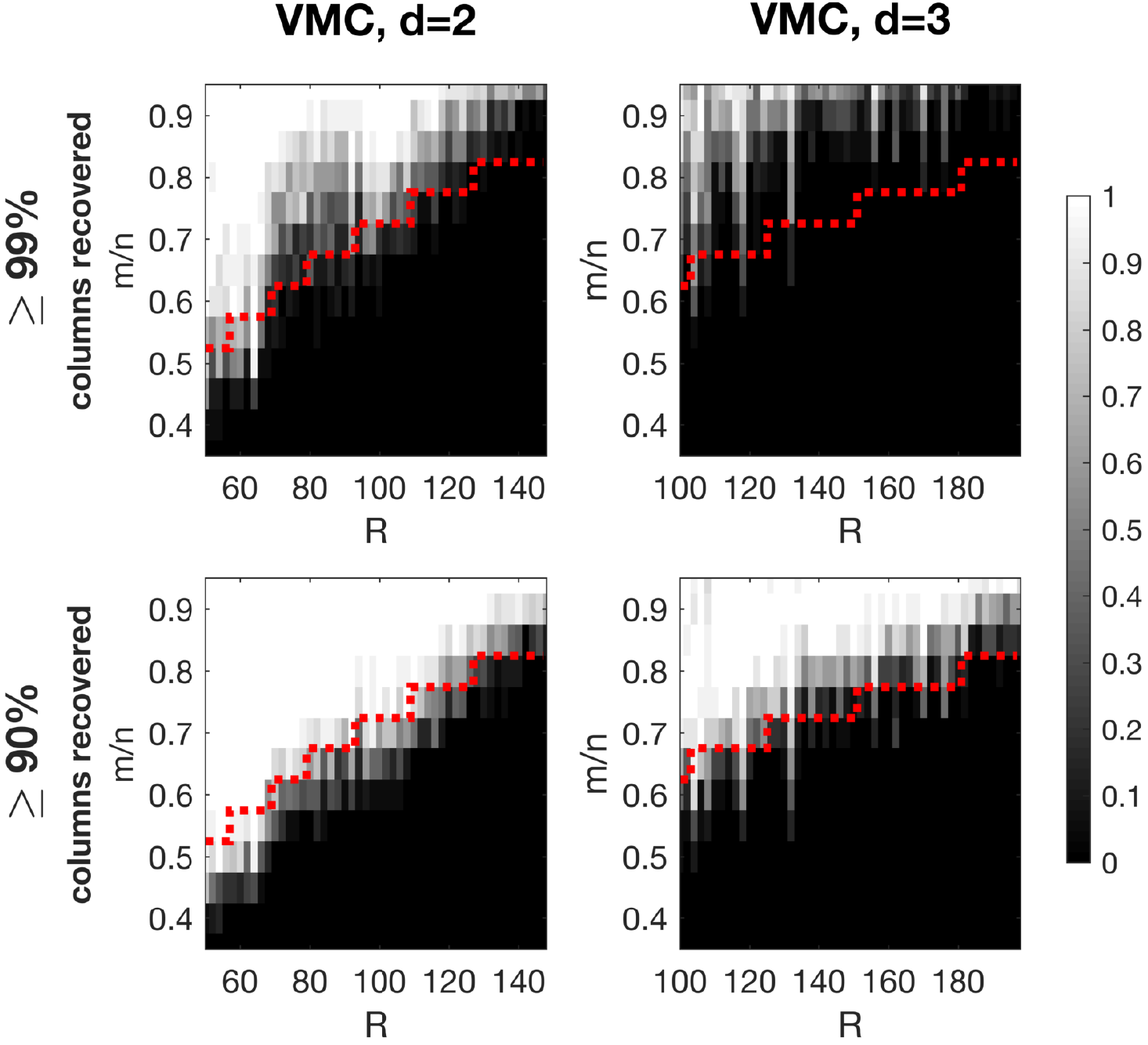}}
\\
(a) Union of Subspaces & (b) Parametric Data
\end{tabular}
\caption{Phase transitions for matrix completion of synthetic variety data. In (a) we simulate data belonging to a union of $k$ for varying $k$ (ambient dimension $n=15$, subspace dimension $r=3$, and $100$ data points sampled from each subspace giving $s=100 k$). In (b) we simulate data belonging union of few parametric curves and surfaces having known feature space rank $R$ (ambient dimension $n=20$ and $s=300$ data points). In all cases we undersample each column of the data matrix at a rate $m/n$, and perform matrix completion using the following algorithms: convex and non-convex low-rank matrix completion (LRMC,LRMC-NCVX) and our proposed variety-based matrix completion (VMC) approach for degree $d=2,3$. The grayscale values 0--1 indicate the fraction of random trials where the columns of the data matrix were successfully recovered up to the specified percentage. In all figures the red dashed line indicates the predicted minimal sampling rate $\rho_0 = m_0/n$ determined by \eqref{eq:minsamp}.}
\label{fig:phase}
\end{figure*}

\section{Numerical Experiments}\label{sec:exp}

\subsection{Empirical validation of sampling bounds}
In Figure \ref{fig:phase} we report the results of two experiments to validate the predicted minimum sampling rate $\rho_0$ in \eqref{eq:minsamp} on synthetic variety data. In the first experiment we generated $n\times s$ data matrices whose columns belong to a union of $k$ subspaces each of dimension $r$. We undersampled each column of the matrix taking $m$ entries uniformly at random, and attempted to recover the missing entries using our proposed IRLS algorithm for VMC (Algorithm \ref{alg:knn} with $p=1/2$) for degrees $d=2$ and $d=3$. As a baseline, we also compared with low-rank matrix completion (LRMC) via nuclear norm minimization in the original matrix domain. We also compare with matrix completion via non-convex Schatten-$1/2$ minimization (LRMC-NCVX) in the original matrix domain, which is implemented using Algorithm \ref{alg:knn}, with a linear kernel, \ie we set $d=1$ in \eqref{eq:kernel}. We said a column of the matrix was successfully completed when the relative recovery error $\|\bm x-\bm x_0\|/\|\bm x_0\|$ was less than $10^{-5}$, where $\bm x$ is the recovered column and $\bm x_0$ is the original column. We used the settings $n=15$, $s= 100 k$, $r=3$, for varying measurements $m$ and number of subspaces $k$, and measured the fraction of successful completions over $10$ random trials for each pair $(m,k)$. 

In the second experiment we attempted to recover synthetic variety data with pre-determined feature space rank. To construct this data, we sampled $s=300$ data points from a union of randomly generated parametric curves surfaces of dimension $1$, $2$ or $3$ belonging to $\mathbb{R}^{n}, n=20$. We generated several examples and sorted each dataset by its empirically determined feature space rank $R$. We follow the same experimental procedure as before. Because the data was high-rank, recovery via LRMC and LRMC-NCVX failed in all instances, and we omit these results from Figure \ref{fig:phase}(b).

Consistent with our theory, we find that VMC is successful at recovering most of the data columns above the predicted minimum sampling rate and typically fails below it. While VMC often fails to recover $100\%$ of the columns near the predicted rate, in fact a large proportion of the columns $(\%99--\%90)$ are completed successfully right up to the predicted rate. Sometimes the recovery dips below the predicted rate (\eg VMC, $d=2$ in Fig.~\ref{fig:phase}(a) and VMC, $d=3$ in Fig.~\ref{fig:phase}(b)). However, since the predicted rate relies on what is likely an over-estimate of the true degrees of freedom, it is not surprising that the $VMC$ algorithm occasionally succeeds below this rate, too.
 
\begin{figure}[ht]
\centering
\begin{minipage}{0.75\linewidth}
\includegraphics[width=\linewidth]{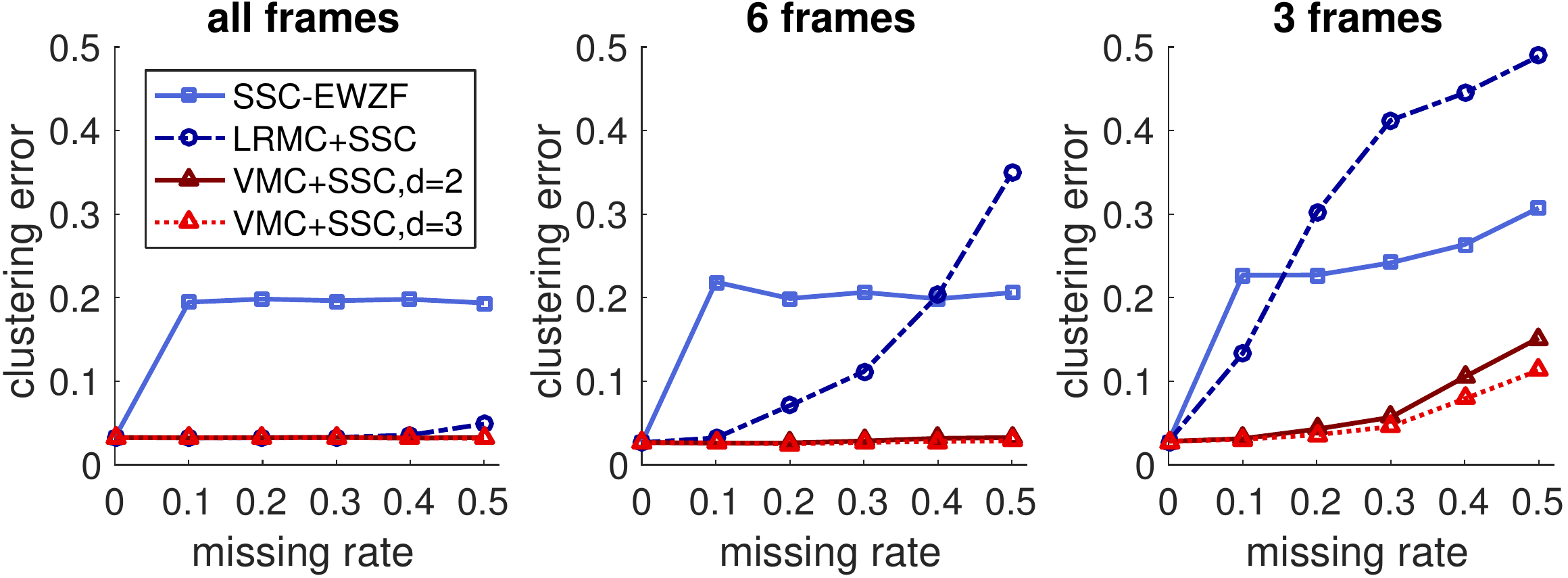}
\end{minipage}
\caption{Subspace clustering error on Hopkins 155 dataset for varying rates of missing data and undersampling of frames. The data is high-rank when fewer frames are sampled, in which case the proposed VMC+SSC approaches gives substantially lower clustering error over other state-of-the-art algorithms for subspace clustering with missing data.}
\label{fig:hopkins}
\end{figure}

\subsection{Motion segmentation of real data}
We also apply the VMC approach to the motion segmentation problem in computer vision \cite{kanatani2001motion} using real data belonging to the Hopkins 155 dataset \cite{tron2007benchmark}. This data consists of several feature points tracked across $F$ frames of the video, where the trajectories associated with one moving object can be modeled as lying in an low-dimensional affine subspace of $\mathbb{R}^{2F}$. We reproduce the experimental setting in \cite{yang2015sparse}, and simulate the high-rank setting by undersampling frames of the data. We also simulate missing trajectories by sampling uniformly at random from the feature points across all frames.

To obtain a clustering from missing data we first completed the missing entries using VMC and then ran the sparse subspace clustering (SSC) algorithm \cite{elhamifar2009sparse} on the result; we call the combination of these methods VMC+SSC. A similar approach of standard LRMC followed by SSC (LRMC+SSC) has been shown to provide a consistent baseline for subspace clustering with missing data \cite{yang2015sparse,elhamifar2016high}, and we use this a basis for comparison. We used publicly available code to implement SSC\footnote{\url{http://www.vision.jhu.edu/ssc.htm}}. The SSC algorithm requires a regularization parameter $\lambda$. We use the data adaptive choice $\lambda = \alpha/\min_j \max_{i\neq j} |\bm X^T\bm X|_{i,j}$ proposed in \cite{elhamifar2013sparse}, with $\alpha = 100$ for all experiments. We also compare against SSC with entry-wise zerofill (SSC-EWZF), which was the most competitive method among those proposed in \cite{yang2015sparse} for the Hopkins 155 dataset with missing data.

In Figure \ref{fig:hopkins} we show the results of our subspace clustering experiment on the Hopkins 155 dataset. We report the \emph{clustering error} of each algorithm, i.e., the proportion of data points clustered incorrectly, in the case where we retain all frames, 6 frames, and 3 frames of the dataset, and over a range of missing data rates. We find the VMC+SSC approach gives similar or lower clustering error than LRMC+SCC for low missing rates. Likewise, VMC+SSC also substantially outperforms SSC-EWZF for high missing rates. Note that unlike SSC-EWZF and the other algorithms introduced in \cite{yang2015sparse}, VMC+SSC also succeeds in setting where the data is low-rank (i.e., when all frames are retained). This is because the performance of VMC is similar to standard LRMC in the low-rank setting. 

\subsection{Completion of motion capture data}
Finally, we also consider the problem of completing time-series trajectories from motion capture sensors. We experiment on a dataset drawn from the CMU Mocap database\footnote{\url{http://mocap.cs.cmu.edu}}. Empirically, this dataset has been shown to be locally low-rank over the time frames corresponding to each separate activity, and can be modeled as a union of subspaces \cite{elhamifar2016high}. We chose a dataset (subject $56$, trial $6$) consisting of nine distinct motions in succession: punch, grab, skip, \emph{etc.}. The data had measurements from $n=62$ sensors at $s=6784$ time instants. We randomly undersampled the columns of this matrix and attempt to complete the data using VMC, LRMC, and LRMC-NCVX and measure the resulting \emph{completion error}: $\|\bm X-\bm X_0\|_F/\|\bm X_0\|_F$, where $\bm X$ is the recovered matrix and $\bm X_0$ is the original matrix. We show the results of this experiment in Figure \ref{fig:exp_mocap}. Similar to results on synthetic data, we find the VMC approach outperforms LRMC and LRMC-NCVX for appropriately chosen degree $d$. In particular, VMC with $d=2,3$ perform similar for small missing rates, but VMC $d=2$ gives lower completion error over $d=3$ for large missing rates, consistent with the results in Figure \ref{fig:phase}.

\begin{figure}[ht]
\centering
\begin{minipage}{0.5\linewidth}
\includegraphics[width=\linewidth]{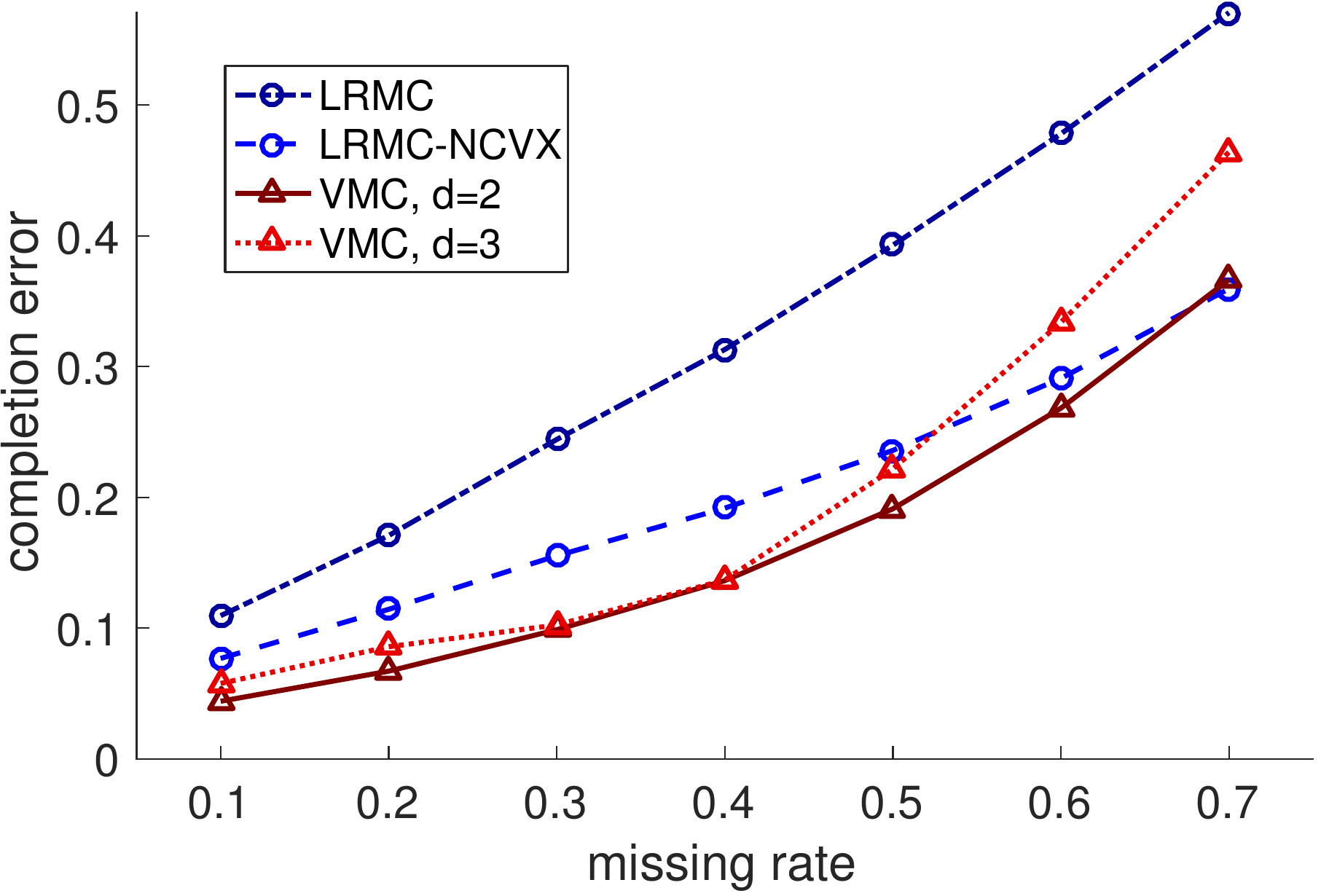}
\end{minipage}
\caption{Completion error on CMU Mocap dataset using the proposed VMC approach compared with convex and non-convex LRMC algorithms.}
\label{fig:exp_mocap}
\end{figure}

\section{Discussion}
An interesting feature of the IRLS algorithm presented in this work is that it can easily accommodate other smooth kernels, including the popular Gaussian radial basis function (RBF) \cite{muller2001introduction}. A similar optimization formulation to ours was presented in the recent pre-print \cite{garg2016non} using Gaussian RBF kernels in place of polynomial kernels, showing good empirical results in a matrix completion context. However, the analysis of the sampling complexity in this case is complicated by the fact that a feature space representation for Gaussian RBF kernel is necessarily infinite-dimensional. Understanding the sampling requirements in this case would be an interesting avenue for future work.

\section{Conclusion}
We introduce a matrix completion approach based on modeling data as an algebraic variety that generalizes low-rank matrix completion to a much wider class of data models, including data belonging to a union of subspaces. We present a hypothesized sampling complexity bound for the completion of a matrix whose columns belong to an algebraic variety. An surprising result of our analysis that that a union of $k$ affine subspaces of dimension $r$ should be recoverable from $O(r k^{1/d})$ measurements per column, provided we have $O(r^d)$ data points (columns) per subspace, where $d$ is the degree of the feature space map. \becca{In particular, if we choose $d = \log k$, then we need only $O(r)$ measurements per column as long as we have $O(r^{\log k})$ columns per subspace.} 
We additionally introduce an efficient algorithm based on an iterative reweighted least squares approach that realizes these hypothesized bounds on synthetic data, and reaches state-of-the-art performance on for matrix completion on several real high-rank datasets.

\bibliography{refs}
\bibliographystyle{ieeetr}

\end{document}